\documentclass{article}

    \PassOptionsToPackage{numbers, compress}{natbib}


    \usepackage[preprint]{score_workshop}



\usepackage{amsmath,amsfonts,bm}









\def\eqref#1{equation~\ref{#1}}









\def\1{\bm{1}}








\def\vtheta{{\bm{\theta}}}
\def\vphi{{\bm{\phi}}}

\def\vx{{\bm{x}}}



\DeclareMathAlphabet{\mathsfit}{\encodingdefault}{\sfdefault}{m}{sl}
\SetMathAlphabet{\mathsfit}{bold}{\encodingdefault}{\sfdefault}{bx}{n}


\def\gD{{\mathcal{D}}}
\def\gE{{\mathcal{E}}}










\newcommand{\E}{\mathbb{E}}



\usepackage[utf8]{inputenc} 
\usepackage[T1]{fontenc}    
\usepackage{hyperref}       
\usepackage{url}            
\usepackage{booktabs}       
\usepackage{amsfonts}       
\usepackage{nicefrac}       
\usepackage{microtype}      
\usepackage{xcolor}         

\usepackage{amsthm}
\usepackage{thmtools, thm-restate}
\usepackage{amsmath}
\usepackage{amsfonts}
\usepackage{amssymb}
\usepackage{mathtools}
\usepackage{float}
\usepackage{subfig}
\usepackage{multicol}
\usepackage{multirow}

\newtheorem{lemma}{Lemma}


\title{Why Are Conditional Generative Models Better Than Unconditional Ones?}

%

\author{%
Fan Bao$^1$, Chongxuan Li$^2$, Jiacheng Sun$^3$, Jun Zhu$^1$ \\
$^1$Dept. of Comp. Sci. \& Tech., Institute for AI, BNRist Center \\
$^1$Tsinghua-Bosch Joint ML Center, THBI Lab,Tsinghua University, Beijing, 100084 China \\
$^2$Gaoling School of Artificial Intelligence, Renmin University of China, \\
$^2$Beijing Key Laboratory of Big Data Management and Analysis Methods, Beijing, China \\
$^3$Huawei Noah's Ark Lab \\
\texttt{bf19@mails.tsinghua.edu.cn};\quad \texttt{chongxuanli@ruc.edu.cn};\\
\texttt{sunjiacheng1@huawei.com};\quad \texttt{dcszj@tsinghua.edu.cn} \\
}

\begin{document}

\maketitle

\begin{abstract}
Extensive empirical evidence demonstrates that conditional generative models are easier to train and perform better than unconditional ones by exploiting the labels of data. So do score-based diffusion models. In this paper, we analyze the phenomenon formally and identify that the key of conditional learning is to partition the data properly. Inspired by the analyses, we propose \emph{self-conditioned diffusion models} (SCDM), which is trained conditioned on indices clustered by the $k$-means algorithm on the features extracted by a model pre-trained in a self-supervised manner. SCDM significantly improves the unconditional model across various datasets and achieves a record-breaking FID of 3.94 on ImageNet 64x64 without labels. Besides, SCDM achieves a slightly better FID than the corresponding conditional model on CIFAR10.
\end{abstract}

\section{Introduction}

Extensive empirical evidence in prior work~\cite{li2017triple,brock2018large,dhariwal2021diffusion} demonstrates that conditional generative models are easier to train and perform better than unconditional ones by exploiting the labels of data. So do score-based diffusion models (DM). For instance, the representative work~\cite{dhariwal2021diffusion} achieves a FID of 10.94 when trained conditionally and a FID of 26.21 when trained unconditionally on ImageNet of size 256x256. 

Intuitively, the gap exists because (1) the marginal distribution induced by a conditional model is more expressive than the corresponding  unconditional model; and (2) the data distribution conditioned on a specific class has fewer modes and is easier to fit than the original data distribution. 

In this paper, we formalize the above intuition in an ideal setting where we have infinite data. It is easy to show that the marginal distribution induced by a conditional model can be viewed as a mixture of the corresponding unconditional models. Further, we derive a sufficient condition for the superiority of the conditional model, which suggests that the conditional model gains more as the conditional data distribution gets simpler. The analyses explain previous empirical findings: conditioning on class labels probably partitions the data into simpler groups according to the semantics of data.

Notably, our analyses apply to all possible conditions, not limited to class labels. Then, the very natural idea is to find a certain way to obtain  meaningful conditions in an unsupervised manner and boost the unconditional generation results. The recent advances in self-supervised learning~\cite{he2020momentum,chen2020simple} show that one can learn predictive representations without labels, which serve as an ideal tool for obtaining meaningful conditions. Specifically, we simply run a clustering algorithm (e.g., $k$-means) on the features extracted by a model pre-trained in a self-supervised manner (on the same dataset) and use the cluster indices as conditions to train a conditional model.

Although our analyses and the self-conditional approach is applicable to all types of deep generative models, we focus on score-based diffusion models in our experiments to explore the boundary of unsupervised generative modeling. Therefore, we refer to our approach as \emph{self-conditioned diffusion models} (SCDM). We systematically evaluate SCDM on several widely adopted datasets. In all settings, SCDM significantly improves the unconditional model. Notably, SCDM achieves a record-breaking FID of 3.94 on ImageNet 64x64 without labels. Besides, SCDM achieves a slightly better FID than the corresponding conditional model on CIFAR10.

\section{Why Are Conditional Generative Models Better Than
Unconditional Ones}

In this section, we present the problem formulation and our analyses.

\subsection{Problem Formulation}
\label{sec:problem}
 
Let $q(\vx, c)$ be the joint distribution of the data $\vx$ and the condition $c$ and $q(\vx) := \sum_c q(\vx, c)$. Let $p_{\vtheta, E}(\vx)$ be a model parameterized by $\vtheta \in \Theta$ and $E \in \gE$, where $\vtheta$ denotes the parameters in the backbone and $E$ is the embedding for a condition. We formalize two learning paradigms as follows. 

In \textit{unconditional learning}, $p_{\vtheta, E}(\vx)$ approximates the marginal data distribution $q(\vx)$ directly and  $E$ is a redundant embedding shared by all data. Formally, given a certain statistics divergence $\gD$ (or more loosely a divergence upper bound~\cite{song2021maximum,bao2022analytic}), unconditional learning aims to optimize
\begin{align}
\label{eq:kl_uncond}
    \min\limits_{\vtheta \in \Theta, E \in \gE} \gD(q(\vx) \| p_{\vtheta, E}(\vx)).
\end{align}

In \emph{conditional learning}, the embedding $E$ is spared to receive the signal from the condition $c$, through an embedding function $\vphi \in \Phi$. This induces a conditional model $p_{\vtheta, \vphi}(\vx|c) \coloneqq p_{\vtheta, E}(\vx)|_{E=\vphi(c)}$, which approximates the conditional data distribution $q(\vx|c)$ by tuning the backbone $\vtheta$ and the embedding function $\vphi$. Formally, conditional learning aims to optimize
\begin{align}
\label{eq:kl_cond}
    \min\limits_{\vtheta \in \Theta, \vphi \in \Phi} \E_{q(c)}\gD(q(\vx|c) \| p_{\vtheta, \vphi}(\vx|c)).
\end{align}

The conditional model applies ancestral sampling to generate samples, where a condition $c$ is firstly drawn from $q(c)$\footnote{We assume $q(c)$ is known, which is satisfied in conditional learning with labels.}, and then a data $\vx$ is drawn from $p_{\vtheta, \vphi}(\vx|c)$. Such a process produces samples from $p_{\vtheta, \vphi}(\vx) \coloneqq \E_{q(c)} p_{\vtheta, \vphi}(\vx|c)$. The generation performance of the conditional model is evaluated according to how close $p_{\vtheta, \vphi}(\vx)$ is to the data distribution $q(\vx)$, i.e., $\gD(q(\vx) \| p_{\vtheta, \vphi}(\vx))$.

\subsection{Analyses}

In this section, we attempt to formalize two insights on why conditional learning of generative models generally outperforms the unconditional one. 

Firstly, we compare the expressive power of the two strategies with the same backbone parameterized by $\vtheta$. As shown in Section~\ref{sec:problem}, the conditional model produces samples from $p_{\vtheta, \vphi}(\vx) = \E_{q(c)} [p_{\vtheta, \vphi}(\vx|c)] = \E_{q(c)} [p_{\vtheta, E} (\vx) |_{E = \vphi(c)} ]$. Therefore, $p_{\vtheta, \vphi}(\vx)$ can be viewed as a mixture of several unconditional models. Namely, the conditional model is more expressive than the unconditional one, despite the fact that both models are based on the same backbone $p_{\vtheta, E}(\vx)$. 
 
Secondly, we derive a sufficient condition for the superiority 
of the conditional model. Let $\vtheta^*_u, E^*_u$ be the optimal solution of the unconditional learning in Eq.~{(\ref{eq:kl_uncond})}. Let $\vtheta_c^*, \vphi_c^*$ be the optimal solution of the conditional learning in Eq.~{(\ref{eq:kl_cond})}. Proposition~\ref{prop:better} characterizes a sufficient condition for $\gD(q(\vx) \| p_{\vtheta_c^*, \vphi_c^*}(\vx)) < \gD(q(\vx) \| p_{\vtheta^*_u, E^*_u} (\vx))$.

\begin{restatable}{proposition}{better}
\label{prop:better}
Suppose for any parameter $\vtheta \in \Theta$ and any condition $c$, approximating $q(\vx|c)$ is simpler than $q(\vx)$ by only tuning the embedding $E$ of $p_{\vtheta, E}(\vx)$, i.e., $\min_E \gD (q(\vx|c) \| p_{\vtheta, E}(\vx)) < \min_E \gD(q(\vx) \| p_{\vtheta, E}(\vx))$.
Then, under additional mild regularity conditions~\footnote{Specifically,  we assume that the divergence $\gD$ is convex and the embedding function space $\Phi$ includes all measurable functions, which are verifiable in practice.  In fact, the former can be satisfied using the KL divergence and the latter can be satisfied by using nonparametric embeddings.}, $\gD(q(\vx) \| p_{\vtheta_c^*, \vphi_c^*}(\vx)) < \gD(q(\vx) \| p_{\vtheta^*_u, E^*_u} (\vx))$ holds. (Proof in Appendix~\ref{sec:proof})
\end{restatable}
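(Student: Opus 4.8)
The plan is to build a chain of inequalities linking four quantities: the unconditional optimum $\gD(q(\vx)\|p_{\vtheta^*_u,E^*_u}(\vx))$, the conditional objective evaluated at the \emph{fixed} backbone $\vtheta=\vtheta^*_u$, the conditional optimum over $(\vtheta,\vphi)$, and finally the marginal divergence $\gD(q(\vx)\|p_{\vtheta_c^*,\vphi_c^*}(\vx))$ that appears in the statement. The strict inequality will be injected at a single link (via the hypothesis) and then shown to survive through the remaining weak links.

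First I would exploit the assumption that $\Phi$ contains all measurable functions, which lets the per-condition embeddings be chosen independently. Fixing the backbone at $\vtheta^*_u$, minimizing the conditional objective over $\vphi$ decouples into a separate minimization over $E$ for each $c$, giving $\min_{\vphi}\E_{q(c)}\gD(q(\vx|c)\|p_{\vtheta^*_u,\vphi}(\vx|c)) = \E_{q(c)}\min_E\gD(q(\vx|c)\|p_{\vtheta^*_u,E}(\vx))$. I would then apply the hypothesis pointwise in $c$, namely $\min_E\gD(q(\vx|c)\|p_{\vtheta^*_u,E}(\vx)) < \min_E\gD(q(\vx)\|p_{\vtheta^*_u,E}(\vx))$. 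Because $(\vtheta^*_u,E^*_u)$ is globally optimal for unconditional learning, fixing $\vtheta=\vtheta^*_u$ forces $\min_E\gD(q(\vx)\|p_{\vtheta^*_u,E}(\vx)) = \gD(q(\vx)\|p_{\vtheta^*_u,E^*_u}(\vx))$. Taking $\E_{q(c)}$ of the strict pointwise bound, and using that the conditional optimum over $(\vtheta,\vphi)$ can only improve on the value at the fixed backbone $\vtheta^*_u$, yields $\E_{q(c)}\gD(q(\vx|c)\|p_{\vtheta_c^*,\vphi_c^*}(\vx|c)) < \gD(q(\vx)\|p_{\vtheta^*_u,E^*_u}(\vx))$.

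The final step converts the expected conditional divergence into the marginal divergence of the statement. Here I would invoke convexity of $\gD$: writing $q(\vx)=\E_{q(c)}q(\vx|c)$ and $p_{\vtheta_c^*,\vphi_c^*}(\vx)=\E_{q(c)}p_{\vtheta_c^*,\vphi_c^*}(\vx|c)$, Jensen's inequality applied jointly to both arguments with mixing measure $q(c)$ gives $\gD(q(\vx)\|p_{\vtheta_c^*,\vphi_c^*}(\vx)) \le \E_{q(c)}\gD(q(\vx|c)\|p_{\vtheta_c^*,\vphi_c^*}(\vx|c))$. Chaining this with the previous display closes the argument.

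The main obstacle is this convexity step, since it is the only place requiring \emph{joint} convexity of $\gD$ in both slots rather than in a single argument; this is precisely why the footnote restricts to convex divergences such as the KL divergence, for which joint convexity is standard. A secondary point to verify carefully is that strictness is preserved: the strict inequality originates from the hypothesis and sits upstream of only weak inequalities, so it propagates to the conclusion, giving $\gD(q(\vx)\|p_{\vtheta_c^*,\vphi_c^*}(\vx)) < \gD(q(\vx)\|p_{\vtheta^*_u,E^*_u}(\vx))$ as claimed.
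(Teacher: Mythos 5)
Your proposal is correct and follows essentially the same route as the paper's proof: decouple the minimization over $\vphi$ into per-condition minimizations over $E$ using the richness of $\Phi$, inject the strict inequality from the pointwise hypothesis, and convert the expected conditional divergence into the marginal divergence via joint convexity of $\gD$. The only cosmetic difference is that you instantiate the backbone at $\vtheta^*_u$ rather than taking $\min_\vtheta$ on both sides of the pointwise bound as the paper's Lemma~\ref{lem:better} does, which yields the same chain of inequalities.
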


\begin{table}[t]
    \caption{FID$\downarrow$ results on different datasets. $K$ represents the number of clusters.}
    \vspace{.1cm}
    \begin{center}
    \begin{tabular}{lrrrr}
    \toprule
         & CIFAR10 & CelebA 64x64 & LSUN Bedroom 64x64 & ImageNet 64x64 \\
         \midrule
        Unconditional DM & 2.72 & 2.14 & 2.69 & 6.44\\
        Conditional DM & 2.24 & - & - & \textbf{3.08}\\
        \midrule
        SCDM ($K=2$) & - & 2.04 & - & -\\
        SCDM ($K=10$) & \textbf{2.23} & \textbf{1.91} & - & -\\
        SCDM ($K=20$) & 2.27 & 2.08 & 2.39 & -\\
        SCDM ($K=30$) & 2.30 & - & - & -\\
        SCDM ($K=50$) & 2.34 & - & - & -\\
        SCDM ($K=100$) & - & - & \textbf{2.25} & -\\
        SCDM ($K=1000$) & - & - & - & 3.94\\
    \bottomrule
    \end{tabular}
    \end{center}
    \label{tab:fid}
\end{table}

The sufficient condition in Proposition~\ref{prop:better} is hard to verify in practice generally\footnote{A simple verifiable case is to fit a mixture of Gaussian (MoG) data by a single Gaussian (unconditional learning) or a MoG with ground-truth cluster indices (conditional learning).}. However, it does provide insights on when conditional learning is preferable. In fact, it implies that the conditional model gains more (i.e., $\min_E \gD (q(\vx|c) \| p_{\vtheta, E}(\vx))$ gets smaller for all $\vtheta$) as the conditional data distribution gets simpler. 
The condition is probably satisfied in practical conditional learning with class labels. In this sense, Proposition~\ref{prop:better} explains previous empirical findings.

\section{Self-Conditioned Diffusion Models}

Note that Proposition~\ref{prop:better} applies to all possible conditions, not limited to class labels, which inspires us to obtain meaningful conditions in an unsupervised manner to boost the unconditional generation results. The recent advances in self-supervised learning~\cite{he2020momentum,chen2020simple} show that one can learn predictive representations without labels, which serves as an ideal tool for obtaining meaningful conditions. 

Specifically, we propose a three-stage algorithm. Firstly, we train a feature extractor on the target dataset (without labels) in a self-supervised manner and extract features. Secondly, we run a clustering algorithm (e.g., $k$-means in our experiments) on these features and obtain the cluster indices for all data. Finally, we train a conditional diffusion model~\cite{nichol2021improved,dhariwal2021diffusion} by taking the cluster indices as conditions. We refer to our approach as \emph{self-conditioned diffusion models} (SCDM).

We mention that the high-level idea of using clustering indices from self-supervised learning coincides with prior work in GANs~\cite{armandpour2021partition,casanova2021instance,noroozi2020self}. This paper presents distinct contributions in the following aspects. First, prior work focuses on avoiding mode collapse while this paper is motivated by a different perspective with theoretical insights missing in the literature. Second, this paper is built upon SOTA diffusion models~\citep{chen2020improved,dhariwal2021diffusion} to explore the boundary of unconditional generative modeling. In fact, we obtain a record-breaking FID of 3.94 on ImageNet 64x64 without 
labels. See a direct comparison with prior work~\citep{casanova2021instance,noroozi2020self} in Table~\ref{tab:imagenet}.

\section{Experiment}

We evaluate SCDM on CIFAR10~\citep{krizhevsky2009learning}, CelebA 64x64~\citep{liu2015faceattributes}, LSUN Bedroom 64x64~\citep{yu15lsun} and ImageNet 64x64~\citep{deng2009imagenet}. By default, we use MoCo-v2~\cite{chen2020improved} on CIFAR10, CelebA 64x64 and LSUN Bedroom 64x64, and use MoCo-v3~\cite{chen2021empirical} on ImageNet 64x64, in the self-supervised learning stage. We use the FID score~\citep{heusel2017gans} to measure the sample quality. We use the same architecture for SCDM and its unconditional and conditional baselines. See more experimental details in Appendix~\ref{sec:setting}.

\subsection{Sample Quality}

Firstly, we compare our SCDM with the unconditional and conditional baselines. As shown in Table~\ref{tab:fid}, SCDM uniformly outperforms the unconditional model and slightly outperforms the conditional model on CIFAR10. On ImageNet 64x64, SCDM greatly improves the FID compared to the unconditional model. We provide generated samples in Figure~\ref{fig:samples}.

In Table~\ref{tab:imagenet}, we compare SCDM with other methods on ImageNet 64x64 in the unlabelled setting. SCDM significantly outperforms all prior methods and achieves a record-breaking FID of 3.94.

\begin{minipage}{\textwidth}
\begin{minipage}[b]{0.56\textwidth}
\centering
\includegraphics[width=\linewidth]{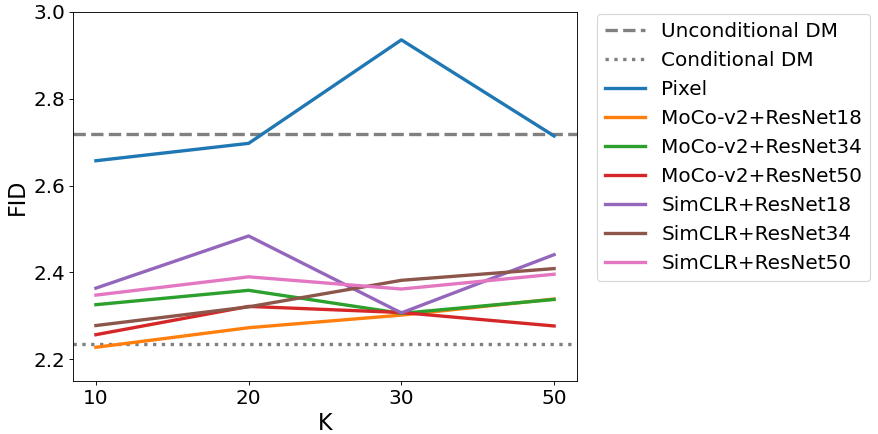}
\captionof{figure}{The effect of the self-supervised learning methods, and the backbones used in self-supervised learning.}
\label{fig:ab}
\end{minipage}
\hfill
\begin{minipage}[b]{0.42\textwidth}
\captionof{table}{ImageNet 64x64 results in the unlabelled setting. $^\dagger$Improved DDPM reports FID with 10K samples, and thereby we use reproduced results on 50K samples~\citep{bao2022analytic}.}
\begin{center}
\begin{tabular}{lc}
\toprule
Method & FID \\
\midrule
SLCGAN~\citep{noroozi2020self} & 19.2 \\
Unconditional BigGAN~\citep{casanova2021instance} & 16.9 \\
IC-GAN~\citep{casanova2021instance} & 9.2 \\
\midrule
Improved DDPM$^\dagger$~\citep{nichol2021improved} & 16.38 \\
Unconditional DM & 6.44 \\
SCDM (ours) & \textbf{3.94} \\
\bottomrule
\end{tabular}
\end{center}    
\label{tab:imagenet}
\end{minipage}
\end{minipage}

\begin{figure}[t]
\begin{center}
\subfloat[CIFAR10]{\includegraphics[width=0.235\columnwidth]{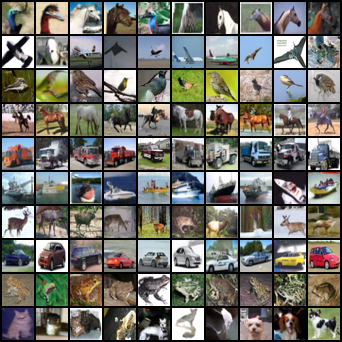}}
\hspace{.05cm}
\subfloat[CelebA 64x64]{\includegraphics[width=0.235\columnwidth]{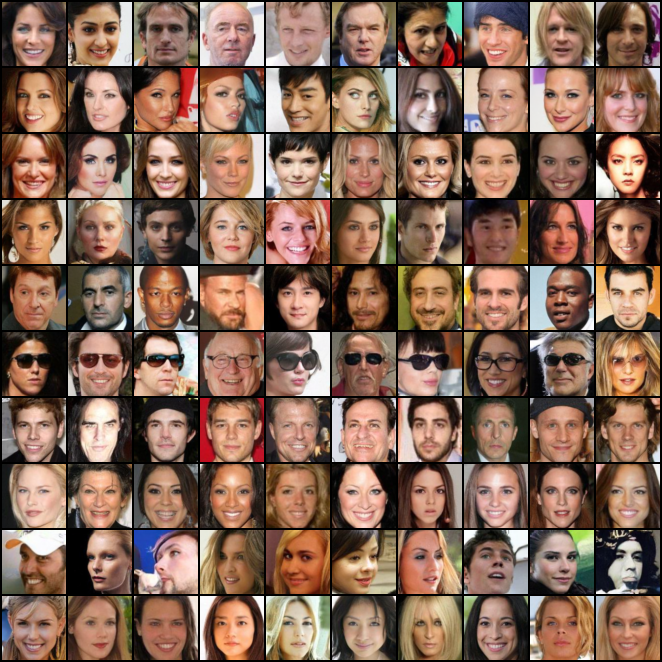}}  
\hspace{.05cm}
\subfloat[LSUN Bedroom 64x64]{\includegraphics[width=0.235\columnwidth]{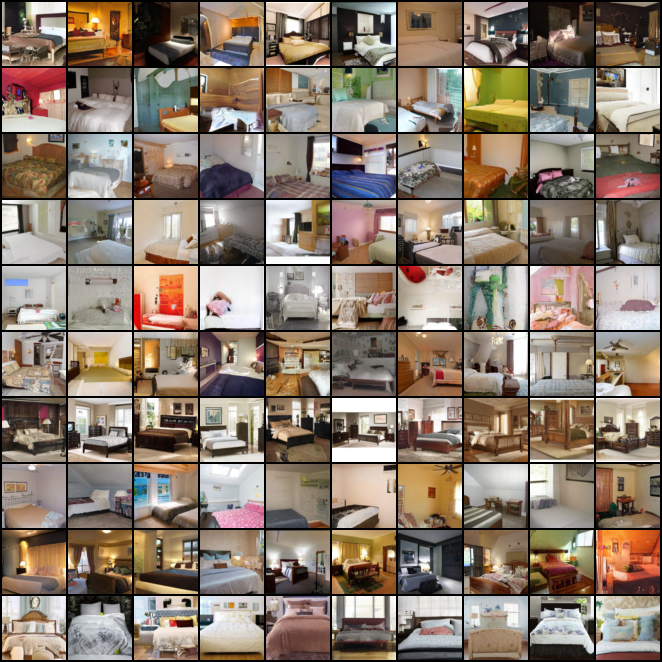}}
\hspace{.05cm}
\subfloat[ImageNet 64x64]{\includegraphics[width=0.235\columnwidth]{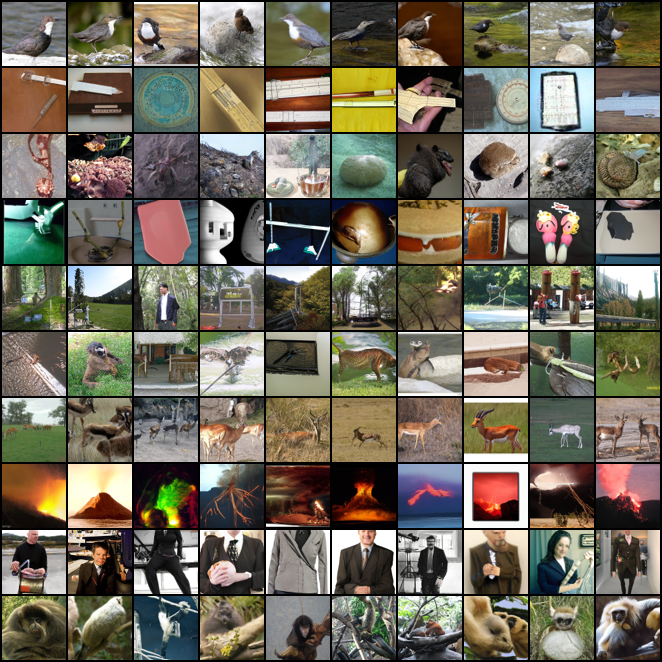}}
\vspace{-.1cm}
\caption{Generated samples of SCDM. Each column corresponds to a cluster. We use the model with the best FID.}
\label{fig:samples}
\end{center}
\vspace{-.8cm}
\end{figure}

\subsection{Ablation Study}

In this part, we study the effect of the self-supervised learning methods. We test MoCo-v2, as well as SimCLR~\citep{chen2020simple} with 3 backbones: ResNet-18, ResNet-34, and ResNet-50. We also perform $k$-means on image pixels directly to get cluster indices, and we call this method \textit{pixel}. As shown in Figure~\ref{fig:ab}, SimCLR performs similarly to MoCo-v2, and the choice of backbones does not affect the performance much. However, $k$-means on image pixels performs much worse than SimCLR and MoCo-v2. Indeed, as shown in Figure~\ref{fig:cmp}, we find objects of diverse classes appear in a single cluster for the pixel method, leading to a more complex distribution in a single cluster, which is more difficult to learn.

\begin{figure}[t]
\begin{center}
\subfloat[MoCo-v2 ($K=10$)]{\includegraphics[width=0.235\columnwidth]{imgs/cifar10_mocov2_resnet18_10.png}}
\hspace{.1cm}
\subfloat[SimCLR ($K=10$)]{\includegraphics[width=0.235\columnwidth]{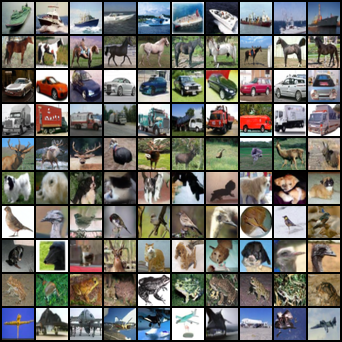}}
\hspace{.1cm}
\subfloat[Pixel ($K=10$)]{\includegraphics[width=0.235\columnwidth]{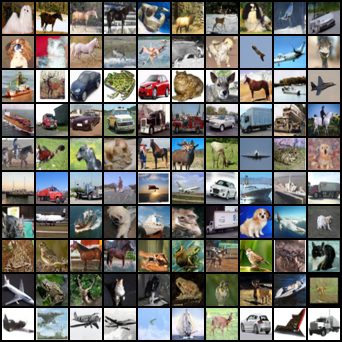}}
\caption{Generated samples on CIFAR10 with different clustering methods.}
\label{fig:cmp}
\end{center}
\vspace{-.8cm}
\end{figure}

\bibliographystyle{plain}
\bibliography{refs.bib}



\appendix

\section{Proof of Proposition~\ref{prop:better}}
\label{sec:proof}

We firstly present a lemma.
\begin{lemma}
\label{lem:better}
Suppose $\min\limits_{\vtheta \in \Theta} \E_{q(c)} \min\limits_{E \in \gE} \gD (q(\vx|c) \| p_{\vtheta, E}(\vx)) < \min\limits_{\vtheta \in \Theta, E \in \gE} \gD(q(\vx) \| p_{\vtheta, E}(\vx))$, the divergence $\gD$ is convex, and the embedding function space $\Phi$ includes all measurable functions. Then we have $\gD(q(\vx) \| p_{\vtheta_c^*, \vphi_c^*}(\vx)) < \gD(q(\vx) \| p_{\vtheta^*_u, E^*_u} (\vx))$.
\end{lemma}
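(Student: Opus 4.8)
The plan is to sandwich the optimal conditional objective value between the two quantities we want to compare. On one side, I will show via Jensen's inequality that this optimal value upper-bounds the target divergence $\gD(q(\vx) \| p_{\vtheta_c^*, \vphi_c^*}(\vx))$; on the other side, I will identify it with the left-hand side of the hypothesis, which is by assumption strictly below the optimal unconditional value $\gD(q(\vx) \| p_{\vtheta^*_u, E^*_u}(\vx))$. Chaining these gives the claim. Note first that, by definition of the unconditional optimum, $\gD(q(\vx) \| p_{\vtheta^*_u, E^*_u}(\vx)) = \min_{\vtheta \in \Theta, E \in \gE} \gD(q(\vx) \| p_{\vtheta, E}(\vx))$, which is exactly the right-hand side of the hypothesis.

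First I would rewrite the optimal conditional objective. Since $p_{\vtheta, \vphi}(\vx|c) = p_{\vtheta, E}(\vx)|_{E = \vphi(c)}$ and $\Phi$ contains all measurable functions, the inner minimization over $\vphi$ decouples pointwise in $c$: one may set $\vphi(c)$ to the best embedding for each condition independently, so that for every fixed $\vtheta$,
\begin{align*}
\min_{\vphi \in \Phi} \E_{q(c)} \gD(q(\vx|c) \| p_{\vtheta, \vphi}(\vx|c)) = \E_{q(c)} \min_{E \in \gE} \gD(q(\vx|c) \| p_{\vtheta, E}(\vx)).
\end{align*}
Minimizing both sides over $\vtheta$ shows that the optimal conditional value $\E_{q(c)}\gD(q(\vx|c) \| p_{\vtheta_c^*, \vphi_c^*}(\vx|c))$ equals $\min_{\vtheta} \E_{q(c)} \min_{E} \gD(q(\vx|c) \| p_{\vtheta, E}(\vx))$, i.e. precisely the left-hand side of the hypothesis.

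Next I would invoke convexity of $\gD$ with Jensen's inequality. Both marginals factor as mixtures over $q(c)$: $q(\vx) = \E_{q(c)} q(\vx|c)$ by definition of the joint, and $p_{\vtheta_c^*, \vphi_c^*}(\vx) = \E_{q(c)} p_{\vtheta_c^*, \vphi_c^*}(\vx|c)$ as recalled in Section~\ref{sec:problem}. Treating $\gD(\cdot \| \cdot)$ as a convex functional of its pair of arguments and applying Jensen over $q(c)$ yields
\begin{align*}
\gD(q(\vx) \| p_{\vtheta_c^*, \vphi_c^*}(\vx)) \le \E_{q(c)} \gD(q(\vx|c) \| p_{\vtheta_c^*, \vphi_c^*}(\vx|c)).
\end{align*}
Combining the last two displays with the hypothesis gives the chain $\gD(q(\vx) \| p_{\vtheta_c^*, \vphi_c^*}(\vx)) \le \min_{\vtheta} \E_{q(c)} \min_{E} \gD(q(\vx|c) \| p_{\vtheta, E}(\vx)) < \min_{\vtheta, E} \gD(q(\vx) \| p_{\vtheta, E}(\vx)) = \gD(q(\vx) \| p_{\vtheta^*_u, E^*_u}(\vx))$, which is the desired strict inequality.

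The hard part will be the Jensen step: it requires $\gD$ to be convex \emph{jointly} in its two arguments (as the KL divergence is), not merely convex in a single slot, so I would be explicit that the convexity hypothesis is read in this joint sense. A secondary point needing care is the decoupling identity in the first step, which relies on optimizing an arbitrary measurable $\vphi$ being equivalent to choosing, for each condition $c$ separately, the embedding $E$ that best approximates $q(\vx|c)$; this is where the assumption that $\Phi$ includes all measurable functions is essential.
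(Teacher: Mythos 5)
Your proof is correct and follows essentially the same route as the paper's: the pointwise decoupling of the minimization over $\vphi$ (using that $\Phi$ contains all measurable functions) to identify the optimal conditional value with the left-hand side of the hypothesis, followed by Jensen's inequality on the mixture representations of $q(\vx)$ and $p_{\vtheta_c^*, \vphi_c^*}(\vx)$, and then chaining with the assumed strict inequality. Your explicit remark that convexity must be read jointly in both arguments (as for KL) is a point the paper leaves implicit, but the argument is otherwise identical.
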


\begin{proof}
According to the convexity of $\gD$, we have 
\begin{align}
\label{eq:better1}
    \gD(q(\vx) \| p_{\vtheta_c^*, \vphi_c^*}(\vx)) = \gD(\E_{q(c)}q(\vx|c) \| \E_{q(c)} p_{\vtheta_c^*, \vphi_c^*}(\vx|c)) \leq \E_{q(c)} \gD(q(\vx|c) \| p_{\vtheta_c^*, \vphi_c^*}(\vx|c))
\end{align}

According to the definition of $\vtheta_c^*, \vphi_c^*$, we have
\begin{align}
    & \E_{q(c)} \gD(q(\vx | c) \| p_{\vtheta^*_c, \vphi^*_c}(\vx|c)) = \min_{\vtheta, \vphi} \E_{q(c)} \gD(q(\vx|c) \| p_{\vtheta, \vphi}(\vx|c)) \nonumber \\
    = & \min_{\vtheta, \vphi} \E_{q(c)} \gD(q(\vx|c) \| p_{\vtheta, \vphi(c)}(\vx)) = \min_{\vtheta}  \E_{q(c)} \min_{\vphi(c)} \gD(q(\vx|c) \| p_{\vtheta, \vphi(c)}(\vx)) \nonumber \\
    = & \min_{\vtheta}  \E_{q(c)} \min_E \gD(q(\vx|c) \| p_{\vtheta, E}(\vx)). \label{eq:better2}
\end{align}

Combining Eq.~{(\ref{eq:better1})}, Eq.~{(\ref{eq:better2})}, and the assumption, we have
\begin{align*}
    & \gD(q(\vx) \| p_{\vtheta_c^*, \vphi_c^*}(\vx)) \leq \min_{\vtheta}  \E_{q(c)} \min_E \gD(q(\vx|c) \| p_{\vtheta, E}(\vx)) \\
    < & \min\limits_{\vtheta, E} \gD(q(\vx) \| p_{\vtheta, E}(\vx)) = \gD(q(\vx) \| p_{\vtheta_u^*, E_u^*}(\vx)).
\end{align*}

\end{proof}

Then we present proof of Proposition~\ref{prop:better}.
\begin{proof}
Since $\forall \vtheta, c, \min\limits_E \gD (q(\vx|c) \| p_{\vtheta, E}(\vx)) < \min\limits_E \gD(q(\vx) \| p_{\vtheta, E}(\vx))$,
we have
\begin{align*}
    \min\limits_\vtheta \E_{q(c)} \min\limits_E \gD (q(\vx|c) \| p_{\vtheta, E}(\vx)) < \min\limits_{\vtheta, E} \gD(q(\vx) \| p_{\vtheta, E}(\vx)).
\end{align*}

According to Lemma~\ref{lem:better}, we have $\gD(q(\vx) \| p_{\vtheta_c^*, \vphi_c^*}(\vx)) < \gD(q(\vx) \| p_{\vtheta^*_u, E^*_u} (\vx))$.
\end{proof}

\section{Experimental Details}
\label{sec:setting}
In the self-supervised learning stage, we use ResNet18 on CIFAR10, and ResNet50 on CelebA and LSUN Bedroom. We train 1600, 800 and 200 epochs on CIFAR10, CelebA and LSUN Bedroom respectively. We resize images to 32x32, and use a batch size of 512. We use the SGD optimizer with a learning rate of 0.06, a momentum of 0.9 and a weight decay of 5e-4. The queue size of MoCo is 12800, and the momentum of MoCo is 0.999. As for ImageNet, we use pretrained ViT-Base provided in \url{https://github.com/facebookresearch/moco-v3}.

We provide training and sampling settings of diffusion models in Table~\ref{tab:setting}. We evaluate FID every 50K training steps and report the best one.

\begin{table}[H]
\begin{center}
\begin{tabular}{lcccc}
\toprule
    Dataset & CIFAR10 & CelebA 64x64 & LSUN Bedroom 64x64 & ImageNet 64x64 \\
    \midrule
    Architecture & IDDPM~\citep{nichol2021improved} & IDDPM & IDDPM & ADM~\citep{dhariwal2021diffusion} \\
    Noise schedule & VP SDE~\citep{song2020score} & VP SDE & VP SDE & VP SDE \\
    Batch size & 128 & 128 & 128 & 2048 \\
    Training steps & 1M & 1M & 1M & 550K \\
    Optimizer & Adam~\citep{kingma2014adam} & Adam & Adam & AdamW~\citep{loshchilov2017decoupled} \\
    Learning rate & 1e-4 & 1e-4 & 1e-4 & 3e-4 \\
    Sampler & EM & EM & EM & DPM-Solver~\citep{lu2022dpm} \\
    Sampling steps & 1K & 1K & 1K & 50 \\
    \bottomrule
\end{tabular}
\end{center}
\caption{The experimental setting of diffusion models. EM represents the Euler-Maruyama sampler.}
\label{tab:setting}
\end{table}

\end{document}